\newcommand*\rot{\rotatebox{90}}
\newtheorem{theorem}{Theorem}
\newcommand\ie{\emph{i.e.}} 
\newcommand\etal{\emph{et al.}}
\begin{document}
%
\title{3D Geometry-Aware Semantic Labeling of Outdoor Street Scenes}

\author{\IEEEauthorblockN{Yiran Zhong}
\IEEEauthorblockA{Research School of Engineering, ANU\\
Data61, CSIRO\\
Canberra, Australia\\
}
\and
\IEEEauthorblockN{Yuchao Dai}
\IEEEauthorblockA{School of Electronics and Information\\
Northwestern Polytechnical University\\
Xi'an, China}
\and
\IEEEauthorblockN{Hongdong Li}
\IEEEauthorblockA{Research School of Engineering, ANU\\
Australian Centre for Robotic Vision\\
Canberra, Australia}}


%


\maketitle

\begin{abstract}
This paper is concerned with the problem of how to better exploit 3D geometric information for dense semantic image labeling. Existing methods often treat the available 3D geometry information (e.g., 3D depth-map) simply as an additional image channel besides the R-G-B color channels, and apply the same technique for RGB image labeling. In this paper, we demonstrate that directly performing 3D convolution in the framework of a residual connected 3D voxel top-down modulation network can lead to superior results. Specifically, we propose a 3D semantic labeling method to label outdoor street scenes whenever a dense depth map is available. Experiments on the ``Synthia'' and ``Cityscape'' datasets show our method outperforms the state-of-the-art methods, suggesting such a simple 3D representation is effective in incorporating 3D geometric information.
\end{abstract}


%
\IEEEpeerreviewmaketitle

\section{Introduction}
Semantic labeling (semantic segmentation) aims to assign class labels (e.g., ``cars'', ``road'', ``building'', ``pedestrian'') to pixels in an image.  It is an important task in computer vision and pattern recognition, which has found wide-range applications in the areas such as autonomous driving \cite{semantic2016}, robot SLAM\cite{Civera2011}, and augmented reality \cite{Matuszka2013}.

Deep Convolutional Neural Networks (CNNs) have gained tremendous success in almost all high-level vision tasks such as image classification, object detection
, as well as semantic labeling \cite{FCN2015}\cite{badrinarayanan2015segnet}\cite{CP2016Deeplab}.  The 2D convolution is defined in the image coordinate, where the filter is applied in the neighborhood defined by image pixel distance.  Deep encoder-decoder (SegNet \cite{badrinarayanan2015segnet}, dilated convolution (DeepLab-LargeFOV \cite{CP2016Deeplab}) have also been proposed under the same framework.  The success of these models mainly lies in their general modeling ability for complex unseen visual scenes.  

To further improve the performance, deeper and wider networks \cite{word.zifeng.2016} have been proposed, which require massive labeled data during training. Even though these models have achieved state-of-the-art performance on various benchmarking datasets, they do not harness the full potentials of available depth/3D clues for semantic segmentation.  Geometric information provides crucial and discriminative semantic cues for color images. Depth maps generally provide complement information to color images, where the 3D structure of the observed scene has been encoded naturally \cite{fusenet2016accv}. Therefore, semantic labeling will benefit from the availability of depth information. For indoor scenes, Hazirbas \etal \cite{fusenet2016accv} proposed a deep auto-encoder network for semantic labeling, where the encoder consists of two branches of networks that simultaneously extract features from color and depth images and fuse depth features into the color feature maps as the network goes deeper. Furthermore, Ma \etal \cite{MVrgbd2017} proposed to leverage the consistencies between multi-view semantic labeling.

However, most existing works have focused on indoor scene labeling where the size of the scene is limited. For outdoor street scene semantic labeling using depth information is difficult due to the following reasons: 1) difficulty in accurate depth acquisition for outdoor scenes; 2) large variation in scene scales; and 3) lacking of outdoor training datasets with dense depth information.

In this paper, we advocate the benefit of using 3D information for outdoor labeling, and propose a simple and efficient way to use the 3D information.  Specifically, we propose a direct way to represent RGB-D image in its natural 3D space, i.e., the way human sense the surrounding 3D world. Given a color image and the associated depth map (from stereo vision or from LIDAR), we transform the color image into 3D voxel space defined by the 3D position of each pixel, which enables subsequent 3D convolution to cater the 3D geometry in extracting semantic feature maps and thus achieves \emph{3D geometry aware semantic labeling}. 

To learn a geometry-aware representation, we propose a light-weight 3D Res-TDM (Residual connected Top-Down Modulation) structure that can squeeze 3D geometric information from depth map and own high resistance to noise and errors. We have performed experiments on the SYNTHIA dataset with ground truth depth map and the Cityscape dataset with computed disparity map. Experimental results demonstrate that our method outperforms the state-of-the-art semantic labeling methods, which indicates the success of our 3D voxel representation in effectively and efficiently encoding 3D geometric information.

The main contributions of the paper can be summarized as:
\begin{compactenum}[1)]
\item A natural and direct 3D representation to encode RGB-D data, thus representing the semantic cues in 3D;
\item 3D convolution to exploit the geometric constraint for semantic labeling, enabling 3D geometry aware semantic labeling;
\item A light weighted 3D res-TDM structure that can squeeze 3D geometric information from depth map and own high resistance to noises and errors.
\end{compactenum}


\section{Related work} 
\textbf{Semantic labeling:} Before the era of deep learning, semantic segmentation has been widely formulated as CRF with hand-craft features and low-level vision cues. The breakthrough in deep learning has also been brought to semantic labeling to learn the nonlinear mapping from image to dense labeling in an end-to-end manner. The most noticeable deep convolutional network based semantic labeling method is FCN\cite{FCN2015}, which takes advantage of existing image classification architectures \cite{Alexnet} \cite{vgg2014} \cite{Resnet}. 
However, the decoder phase of FCN is relatively simple that makes it difficult to train. SegNet\cite{badrinarayanan2015segnet} tackles the above weakness by using an auto-encoder structure. Dilated convolution \cite{CP2016Deeplab} has also been introduced to effectively enlarge the field of view of filters to incorporate larger context without increasing the number of parameters or the amount of computation.

\textbf{RGB-D semantic labeling:} Depth information has been used as an important cue to refine semantic labeling in computer vision 
\cite{Muller2014}. Zhang \etal \cite{Zhang:ECCV2010} designed hand-crafted depth features such as surface normals, height above ground and neighboring smoothness and put them into a classifier. Saurabh \etal \cite{Gupta2014} geocentrically encoded depth into disparity, height and angle as a HHA representation and proposed a 2.5D proposal for object detection and semantic segmentation. Lai \etal \cite{Lai2014} utilizes HMP3D features in an MRF framework to label objects in 3D scenes. More recently, Li \etal \cite{LiRGBDLSTM2016} fused contextual information from RGB and depth channels by stacking convolutional layers with an LSTM layer, which memorizes both short- and long-range spatial dependencies in an image along vertical direction. Another LSTM-F layer has also been used to integrate contexts from different channels and bi-directional propagation is performed to fuse vertical contexts. Hazirbas \etal \cite{fusenet2016accv} proposed a simpler network with auto-encoder style, where two encoders are used to extract features from RGB image and depth image individually and one decoder is applied to decode RGB-D channels. Extracted depth features are fused with RGB in every encoder layer. In these works, color features and depth features are coupled in a human-designed way, which may fail to exploit the strong correlation between color image and depth map.

\textbf{3D convolution:} Volumetric (i.e., spatially 3D) convolution has been successfully used in video analysis (\cite{Kundu2016cvpr}). VoxNet \cite{maturana_iros_2015} and 3D ShapeNet \cite{Zhirong15CVPR} are two pioneer works in applying 3D convolution on voxelized 3D shapes. Very recently, Song \etal \cite{song2016ssc} introduced 3D voxel representation of volumetric occupancy and simultaneously performed scene completion and scene parsing for indoor scenes. However, both works only preserve 3D structure information for object recognition and discard color information in 3D convolution. Moreover, the output resolution of \cite{song2016ssc} is only $36\times 60 \times 60$, which is insufficient for outdoor applications and it requires large labeled data for network training. Multi-view strategy has also been leveraged to exploit 3D geometry information. MVCNN \cite{su15mvcnn} projects 3D point clouds onto different image planes and converts each view image into CNN features. However, this strategy could not be applied to outdoor semantic labeling task straightforwardly due to the difficulty in warping small objects between different views. 


By contrast to the above works, we propose to make use of color information as well as 3D structural information for dense semantic labeling under an unified framework. Our light-weight network architecture also allows us to increase the output dimension with a reasonable scale and can be trained from scratch with only thousands of samples.  

\section{Our Approach}
Here, we describe our geometry-aware semantic labeling framework by performing 3D convolution in the framework of 3D voxel convolutional neural network. First, by contrast to existing methods that simply treating depth map as an additional channel besides the R-G-B channels, we represent the input RGB-D images in 3D voxel representation, where each voxel is associated with color. Then a top-down module is proposed to exploit the rich 3D geometric information for outdoor scene semantic labeling, where 3D convolution is performed to extract 3D geometry aware features.

\subsection{3D Representation}
Given RGB-D images, existing methods either represent the generic 3D point clouds with volumetric or multi-view representation. The volumetric representation encodes a 3D shape as a 3D tensor of binary or real values while the multi-view representation encodes a 3D shape as a collection of renderings from multiple viewpoints. However these representations are mainly designed for indoor applications and cannot cope with outdoor scenarios for the following reasons : 1) difficulty in accurate depth acquisition; 2) large variation in scene scales; and 3) lack of outdoor training dataset with dense depth information. Furthermore, for a typical driving scene, the depth ranges from 0.5 meters to infinity (i.e., the sky), which makes it impossible to discretize depth values into a certain range. Therefore direct voxelizing in 3D space for outdoor street scene is infeasible.

To cater the above difficulties, we propose a new and yet direct 3D voxel representation for outdoor street scenes. Specifically, instead of resorting to the $XYZ$ space for 3D point clouds, we propose to combine the image coordinate and the disparity directly, thus $UVD$ space, where $(U,V)$ index the 2D image coordinate while $D$ indexes the discrete disparity. At a first glance, this 3D voxel representation may introduce severe distortion in 3D representation. Here we demonstrate that while providing simplicity in representation, the $UVD$ 3D voxel representation also owns much desired geometric property as in the original $XYZ$ space.
\begin{theorem}
Any order curve in the $XYZ$ 3D space corresponds to a 3D curve with the same order in the $UVD$ 3D space.
\end{theorem}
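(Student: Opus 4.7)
The plan is to exhibit the mapping $(X,Y,Z) \mapsto (U,V,D)$ as the affine restriction of a projective linear automorphism of $\mathbb{P}^3$, after which the claim reduces to the classical fact that such automorphisms preserve the degree of algebraic curves.

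First I would write out the standard stereo pinhole model explicitly: $U = fX/Z + c_u$, $V = fY/Z + c_v$, $D = fb/Z$, where $f$ is the focal length, $b$ is the baseline, and $(c_u,c_v)$ is the principal point. Note that naive substitution of a polynomial parameterization $(X(t),Y(t),Z(t))$ of degree $n$ into these expressions yields rational rather than polynomial functions of $t$, so the only reading of ``order'' under which the statement can be true is the algebraic degree of the curve --- the number of intersection points with a generic hyperplane --- and not the degree of a parameterization.

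Next I would lift to homogeneous coordinates $[X:Y:Z:W]$ and $[U:V:D:T]$ on two copies of $\mathbb{P}^3$ and rewrite the camera equations as the single projective map
\[
[X:Y:Z:W] \;\longmapsto\; [\,fX + c_u Z\,:\,fY + c_v Z\,:\,fb\,W\,:\,Z\,].
\]
Specializing to $W=1$ and dehomogenizing by the last entry $Z$ recovers $(U,V,D)$ exactly, so this projective map restricts correctly on the physically relevant affine chart. The associated $4\times 4$ matrix has determinant $-f^{3}b$, which is nonzero for any valid camera, so the map is in fact a projective linear isomorphism of $\mathbb{P}^3$.

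The argument then concludes by invoking the standard fact that a projective linear isomorphism of $\mathbb{P}^3$ sends an irreducible algebraic curve of degree $n$ to an irreducible algebraic curve of degree $n$: hyperplanes pull back to hyperplanes, so the generic intersection number with a hyperplane, which is the definition of order, is preserved. The part I expect to be the main obstacle is the treatment of the exceptional locus $\{Z=0\}$ --- the camera's plane at infinity, which contains the horizon and sky directions --- since this hyperplane is exchanged with a different hyperplane in $UVD$ space under the map. I would handle this by performing the degree-preservation argument on the projective closures of the two curves and only restricting to the affine chart at the very end, so the unavoidable rearrangement of the points at infinity does not disturb the intersection count that certifies the order.
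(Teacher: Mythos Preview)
Your argument is correct and rests on the same underlying observation as the paper's --- that the passage from $(X,Y,Z)$ to $(U,V,D)$ is a projective linear change of coordinates --- but you package it quite differently. The paper proceeds by brute substitution: it writes the general quadric $[X,Y,Z,1]\,\mathbf{A}\,[X,Y,Z,1]^T=0$, plugs in $X=(u-u_0)Z/f_x$, $Y=(v-v_0)Z/f_y$, $Z=fb/d$, and observes that after clearing the common factor $1/d^2$ one is left with $[u-u_0,v-v_0,1,d]\,\mathbf{B}^T\mathbf{A}\mathbf{B}\,[u-u_0,v-v_0,1,d]^T=0$ for a fixed diagonal $\mathbf{B}$, hence again a quadric; it then asserts the same mechanism works for any degree. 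Notice that despite the word ``curve'' in the statement, the paper's computation is really about implicit hypersurfaces of degree $n$, not one-dimensional varieties. Your route --- lifting to $\mathbb{P}^3$, exhibiting the explicit invertible $4\times4$ matrix, and invoking preservation of hyperplane intersection number --- is the more rigorous formulation of the same content, and it has two concrete advantages: it makes precise which notion of ``order'' is being preserved (algebraic degree, not parametric degree), and it honestly confronts the $Z=0$ locus that the paper's substitution silently assumes away. The paper's version, on the other hand, is entirely elementary and requires no algebraic-geometry vocabulary, which fits the venue.
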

\begin{proof}
Without loss of generality, we take the second order surface in 3D as an example. A second order surface in the $XYZ$ 3D space is defined by the following equation:
\begin{equation}
[X_i, Y_i, Z_i, 1] \mathbf{A} [X_i, Y_i, Z_i, 1]^T = 0,
\end{equation}
where $(X_i,Y_i,Z_i)$ defines the 3D points on the surface and $\mathbf{A} \in \mathbb{R}^{4\times 4}$ indexes the 3D surface. The $UVD$ space and the $XYZ$ space are connected via perspective projection:
\begin{equation}
X_i = \frac{(u_i-u_0)}{f_x} Z_i, Y_i = \frac{(v_i-v_0)}{f_y} Z_i, Z_i = \frac{fb}{d_i},
\end{equation}
where $f_x, f_y, u_0, v_0$ are the intrinsic parameters of the camera while $f,b$ define the transformation from disparity $d_i$ to 3D coordinate $Z_i$. By substituting these relations into the 3D surface and re-organizing the equation, we have
\begin{equation}
\begin{split}
\left[
\begin{array}{c}
u_i-u_0 \\ 
v_i-v_0 \\
1, \\
d_i \\
\end{array}
\right]^T \left[
\begin{array}{cccc}
\frac{fb}{f_x} & 0 & 0 & 0\\ 
0 & \frac{fb}{f_y} & 0 & 0\\
0 & 0 & fb  &  0\\
0 & 0 & 0 & 1\\
\end{array}
\right]^T \mathbf{A} \\
\left[\begin{array}{cccc}
\frac{fb}{f_x} & 0 & 0 & 0\\ 
0 & \frac{fb}{f_y} & 0 & 0\\
0 & 0 & fb  &  0\\
0 & 0 & 0 & 1\\
\end{array}
\right]\left[
\begin{array}{c}
u_i-u_0 \\ 
v_i-v_0  \\
1 \\
d_i \\
\end{array}
\right] = 0.
\end{split}
\end{equation}
It is thus clear that a second order surface in $XYZ$ space has been transformed to another second order surface in $UVD$ space. The above proof could be extended to any order 3D surface directly.
\end{proof}

Therefore, we can conclude that any order parametric surfaces defined in the $XYZ$ space have a corresponding surface of the same order in the $UVD$ space. In other words, the transformation from $XYZ$ space to $UVD$ space is curve order preserving. 

\subsection{2D convolution VS 3D convolution}
State-of-the-art semantic labeling methods use deep convolutional network to learn the nonlinear mapping from image to dense semantic labeling, where the convolution is conducted in a 2D manner. As the neighboring relation is defined on the image plane, the 2D convolution may fail to extract feature with 3D geometry aware. Instead, 3D convolution in the 3D voxel space could integrate the appearance cues in 3D geometry aware manner, \ie, the 3D distance has been catered in convolution.   

Given a color image, the 2D convolution is expressed as Eq \ref{eq:2d}. The value of an unit at position $(u,v)$ in the $i^{th}$ feature map is denoted as $p_{i}^{u,v}$,
\begin{equation}
p_{i}^{uv}= \sum_k\sum_{m=0}^{M_i-1}\sum_{n=0}^{N_i-1}w_{ik}^{mn}p_{(i-1)k}^{(u+m)(v+n)},
\label{eq:2d}
\end{equation}
where $w_{ik}^{mn}$ is the coefficient at the position $(m,n)$ of the kernel connected to the $k^{th}$ feature map. $M,N$ are the height and width of the kernel. When the convolution is conducted in 3D, the value of an unit at position $(u,v,d)$ in the $i^{th}$ feature map denoted as $p_{i}^{u,v,d}$ is given by
\begin{equation}
p_{i}^{uvd}= \sum_k\sum_{m=0}^{M_i-1}\sum_{n=0}^{N_i-1}\sum_{l=0}^{L_i-1}w_{ik}^{mnl}p_{(i-1)k}^{(u+m)(v+n)(d+l)},
\label{eq:3d}
\end{equation}
where $d$ is the third dimension of the feature map and $L_i$ is the size of the 3D kernel along the third dimension. 3D convolution can extract features from both spatial and disparity dimensions. 

In Fig.~\ref{fig:2D_3D_convolution}, we compare 2D convolution and 3D convolution for an outdoor street scene. As observed from the illustration, the 2D convolution extracts feature in neighborhood defined on the image plane, which could involve points far away in 3D space. By contrast, 3D convolution in the $UVD$ space succeeds to extract features in a 3D geometry aware manner. 

\begin{figure}[!htp]
\begin{center} 
\subfigure{
\includegraphics[width=0.45\columnwidth]{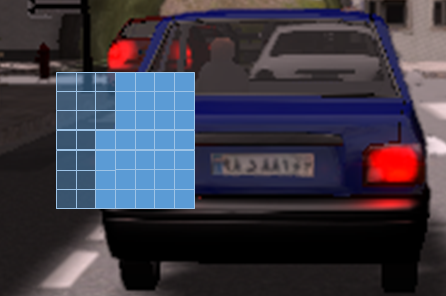} } \hspace{-.3cm}
\subfigure{
\includegraphics[width=0.45\columnwidth]{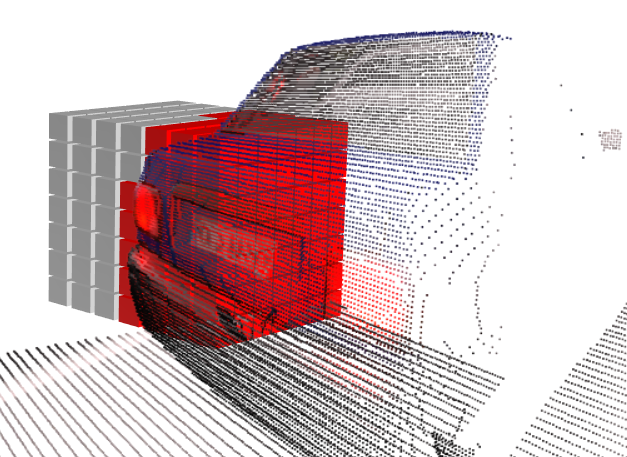} }
\end{center}
\caption{\label{fig:2D_3D_convolution}\small Illustration of 2D convolution and 3D convolution for semantic labeling. The left image demonstrates the widely used 2D convolution in extracting feature maps while the right image illustrates the corresponding 3D convolution conducted in the 3D voxel space. Note that the natural neighborhood relation is not preserved in the projection from 3D to 2D.}
\end{figure}



\subsection{Network Architecture}
\begin{figure*}[!htp]
\centering
\includegraphics[width=2\columnwidth]{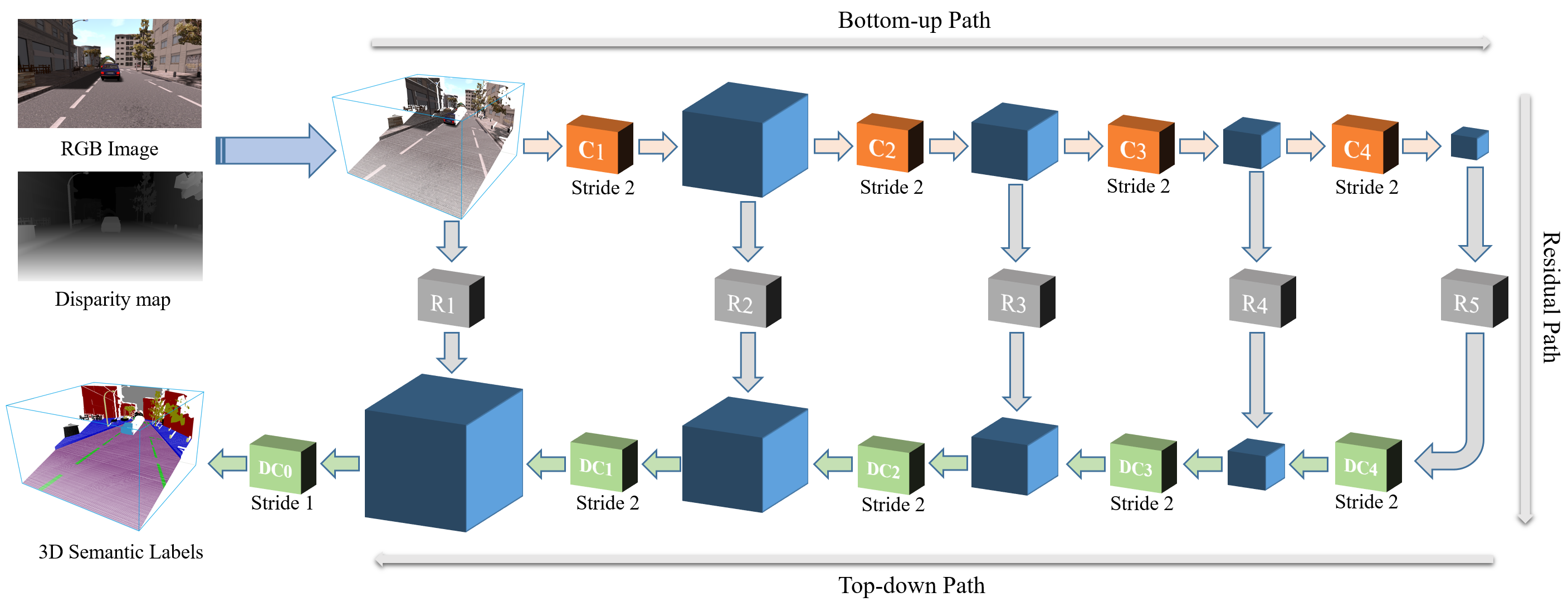} 
\caption{\label{fig:net} \small A nutshell of our 3D geometry-aware semantic labeling framework. The input RGB-D images are converted to the 3D voxel representation. $C_i$ denotes the 3D convolution layer that encodes geometric and contextual information, $R_i$ is residual module that connects low level features to the top-down pathway. $DC_i$ is the 3D deconvolution layer to decode geometric and contextual information. We achieve a 3D semantic labeling, which could be projected to 2D for the sake of comparison.}
\end{figure*}

Our goal is to assign each 3D point with a class label. A natural solution is to do 3D convolution on these point clouds. Also, since small objects such as traffic lights and signs play equally important role in semantic labeling, we adopt the idea of Top-Down Modulation (TDM) \cite{TDM2016}. We not only convert it to 3D, but also modify it to better suit for our case. Note that in our 3D representation, most voxelized 3D labels are 0s. For these points, identity mappings are optimal. Therefore we swap the lateral connection between Bottom-up features and Top-down features with residual connection, and let the solvers simply drive the weights of the multiple nonlinear layers toward zero to approach identity mappings. Formally, we define a block from Top-down path:
\begin{equation}
\mathbf{y} = \mathcal{F}(\mathbf{x}_{C_i},\{\mathbf{W}_i\})  + \mathbf{x}_{DC_i},
\label{eq:res}
\end{equation}
where $\mathbf{y}$ denotes the output of residual connection, $\mathbf{x}_{C_i}$ is the output from the $i$-th convolution layer and $\mathbf{x}_{DC_i}$ is output from the $i$-th deconvolution layer. The function $\mathcal{F}(\mathbf{x},\{\mathbf{W}_i\})$ is the residual mapping to learn. $+$ represents element-wise addition. Thus the dimensions of $\mathbf{x}_{R_i}$ and $\mathbf{x}_{DC_i}$ must be equal as in Eq. \ref{eq:res}.

In Fig.~\ref{fig:net}, we present a nutshell of our overall architecture of the proposed network. Given a color image and its corresponding disparity map, we first represent the RGB-D in the 3D $UVD$ space as defined in Section 3.1. In the Bottom-up phase, the 3D volume ($H\times W\times (D+1)\times Ch$) passes through a series of 3D convolutional layers ($C_i$) with the same kernel size $3\times 3 \times 3$ and a stride 2 until achieving an encoded feature volume with dimension $(1/16) H\times (1/16)W\times (1/16)(D+1)\times F$, where $H,W,D,Ch,F$ represent the height, width, disparity levels, and number of channels and features respectively. In the Top-down phase, a mirrored process scales up the encoded feature volume back to the original size by swapping the 3D convolution with 3D deconvolution. For each scale, we apply our Res-TDM with a residual module $R_i$. Each $R_i$ consists of two 3D convolution layers with the same kernel size $3\times 3 \times 3$ and stride 1. 

We employ the cross-entropy loss given by Eq. \ref{eq:loss} as our loss function for training the network. 
\begin{equation}
L(\mathbf{w}) = -\frac{1}{N+1}\sum_n[y_n\log \hat{y}_n + (1-y_n)\log (1-\hat{y}_n)]
\label{eq:loss}
\end{equation}
where $\hat{y}_n = g(\mathbf{w}\dot{\mathbf{x}}_n)$ with logistic function $g(z)$. $\mathbf{w}$ is the vector of weights and each sample is labeled by $n = 0,1,2,...,N-1$. Note that there is a large variation in the number of pixels for each class. Despite of the imbalance distribution of valid labels, we only have $1/D$ valid labels in total. In other words, if the network predict all zeros, it can still achieve a training accuracy higher than $95\%$. In order to avoid our network drop to this local minimum, we apply a residual module $R$ directly from the input that impose the network only to learn parameters around the areas with non-zero input.

Training our end-to-end pixel-wise semantic labeling network is very straightforward, which can be trained under the supervision of ground-truth semantic labels. Supervision is applied on the volumized 3D predictions and labels. All void 3D points (e.g., points before an object or behind an object) cannot be ignored and should be also given a void label 0. In the prediction phase, we perform max pooling along the disparity dimension to convert the 3D volume back to a 2D image and calculate the errors. This strategy can crease the robustness when dealing with noises and errors on disparity maps. For example, there is no guarantee that our network will predict the right label at the exact disparity level. When the disparity map is noisy, points with the same labels may have very different disparity levels. In this case, the network may predict the right label on the similar disparity level rather than the noise one.

\section{Evaluation}
In this section, we evaluate our proposed method with a comparison to alternative approaches and present an ablation study to better understand the proposed framework. Our method is evaluated on both synthetic and real datasets.

\subsection{Dataset}
For synthetic data, we use the SYNTHetic Collection of Imagery and Annotations (SYNTHIA) dataset \cite{RosCVPR16}, which contains 3 subsets: synthia-rand-cvpr16, synthia-rand-cityscapes and synthia-video-sequence. We choose the synthia-rand-cityscapes subset for experiments. It consists of 23 classes and a total of $9400$ frames of outdoor scene with different weather and lighting conditions as well as randomly generated viewing angles. Since the dataset does not provide training and testing split, we randomly select 6000 frames for training, 1900 frames for validation and the remaining 1500 frames for testing. We manually convert the given ground truth depth maps to scaled inverse depth map in the range of $[0,191]$ and resize the input image to $80\times 128$.

For real data experiment, we use the Cityscape dataset \cite{Cordts2016Cityscapes}, which contains 5,000 stereo frames of fine annotated ground truth semantic labels. We choose 2975 frames for training and use 500 frames for testing. In experiments, we compute the disparity map by using state-of-the-art stereo matching method, which is truncated to the range $[0,111]$. The input images are resized to the resolution of $256\times 512$. 

\textbf{Data augmentation} We employ the mirror manipulation to augment the training examples for both datasets, since it maintains the geometry relationships.

\subsection{Optimization}
The proposed network architecture was implemented with Tensorflow \cite{tensorflow}. We employed the RMSProp \cite{Tieleman2012} with a constant learning rate of $1\times 10^{-3}$ to optimize all models in end-to-end manner. For the ``Synthia'' dataset, we normalized input images' RGB values to $[-1,1]$ and trained our network from a random initialization for 50 epochs, which took 50 hours to converge by using a single NVIDIA Pascal Titan-X GPU and 1 second per frame in testing phase. However, the testing global accuracy climbs up to over $80\%$ within one epoch. For the Cityscape dataset, we trained our network (S3D) with color input for 30 epochs. In order to fit the 12G memory, we reduce the number of disparity levels to 48. We also trained our S3D network with feature input. The features were extracted from Resnet-38\cite{word.zifeng.2016} with the same input dimension. The input feature dimension of our network is $32\times 64 \times 512$. We added 3 extra upsampling layers in order to match the output resolution. The network converged quickly within 14 epochs. Note we did not use any post-processing to refine the results.

\subsection{Evaluation metric}
We measure the semantic labeling performance of our network with three metrics. Denote the total number of classes as $k$, $p_{ij}$ as the amount of pixels belonging to class $i$ which are predicted to be class $j$, the Global accuracy $G = \frac{\sum_i p_{ii}}{\sum_i\sum_j p_{ij}}$ measures the percentage of pixels correctly classified in the dataset. The Class Average Accuracy $C = \frac{1}{k+1}\sum_i\frac{p_{ii}}{\sum_j p_{ij}} $ normalizes the accuracy over the classes, therefore all classes share the same weight under this metric. Mean intersection over union $mIoU = \frac{1}{k+1}\sum_i\frac{p_{ii}}{\sum_j p_{ij}+\sum_j p_{ji}-p_{ij}} $ is used in the Cityscapes benchmark \cite{Cordts2016Cityscapes}. It is a more strict metric than class average accuracy since it penalizes false positive predictions.


\subsection{Experimental results}

\begin{figure*}[htb]
\begin{center} 
\subfigure{
\includegraphics[width=0.31\columnwidth]{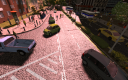} } \hspace{-.3cm}
\subfigure{
\includegraphics[width=0.31\columnwidth]{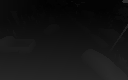} } \hspace{-.3cm}
\subfigure{
\includegraphics[width=0.31\columnwidth]{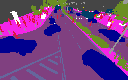} } \hspace{-.3cm}
\subfigure{
\includegraphics[width=0.31\columnwidth]{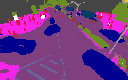} } \hspace{-.3cm}
\subfigure{
\includegraphics[width=0.31\columnwidth]{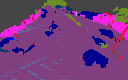} } \hspace{-.3cm}
\subfigure{
\includegraphics[width=0.31\columnwidth]{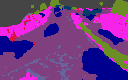} }

 \vspace{-.2cm}
\subfigure{
\includegraphics[width=0.31\columnwidth]{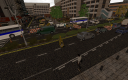} } \hspace{-.3cm}
\subfigure{
\includegraphics[width=0.31\columnwidth]{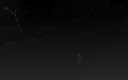} } \hspace{-.3cm}
\subfigure{
\includegraphics[width=0.31\columnwidth]{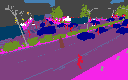} } \hspace{-.3cm}
\subfigure{
\includegraphics[width=0.31\columnwidth]{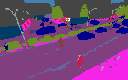} } \hspace{-.3cm}
\subfigure{
\includegraphics[width=0.31\columnwidth]{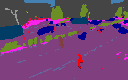} } \hspace{-.3cm}
\subfigure{
\includegraphics[width=0.31\columnwidth]{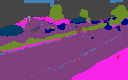} } 

 \vspace{-.2cm}
\subfigure{
\includegraphics[width=0.31\columnwidth]{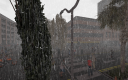} } \hspace{-.3cm}
\subfigure{
\includegraphics[width=0.31\columnwidth]{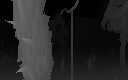} } \hspace{-.3cm}
\subfigure{
\includegraphics[width=0.31\columnwidth]{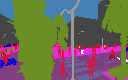} } \hspace{-.3cm}
\subfigure{
\includegraphics[width=0.31\columnwidth]{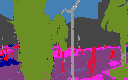} } \hspace{-.3cm}
\subfigure{
\includegraphics[width=0.31\columnwidth]{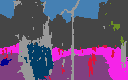} } \hspace{-.3cm}
\subfigure{
\includegraphics[width=0.31\columnwidth]{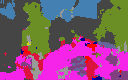} } 

\vspace{-.2cm}
\setcounter{subfigure}{0}
\subfigure[\scriptsize Color image]{
\includegraphics[width=0.31\columnwidth]{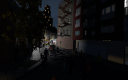} } \hspace{-.3cm}
\subfigure[\scriptsize Disparity]{
\includegraphics[width=0.31\columnwidth]{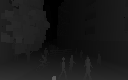} } \hspace{-.3cm}
\subfigure[\scriptsize Ground truth]{
\includegraphics[width=0.31\columnwidth]{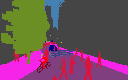} } \hspace{-.3cm}
\subfigure[\scriptsize Our method]{
\includegraphics[width=0.31\columnwidth]{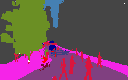} } \hspace{-.3cm}
\subfigure[\scriptsize FuseNet]{
\includegraphics[width=0.31\columnwidth]{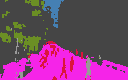} } \hspace{-.3cm}
\subfigure[\scriptsize SegNet]{
\includegraphics[width=0.31\columnwidth]{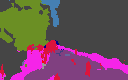} } 
\caption{\label{fig:sy1} \small \textbf{Quality comparison on the SYNTHIA dataset} We select images with different lighting and weather conditions as well as different viewing angles. Our method (d) shows superior performance, particularly it generates sharp boundaries for small objects. FuseNet (e) and SegNet (f) achieve similar performance but with the help of disparity map, FuseNet (e) captures more small objects such as pedestrians and poles.}
 \end{center}
\end{figure*}

\begin{table*}[h]
\centering
\caption{\label{tab:sy}\small Performance evaluation on the SYNTHIA dataset}
\label{synthia_result}
\tabcolsep=0.145cm
\begin{tabularx}{\textwidth}{c|cccccccccccccccccc|ccc}
     Method                              &\rot{sky}&\rot{Building}&\rot{Road}&\rot{Sidewalk}&\rot{Fence}&\rot{Vegetation}&\rot{Pole}&\rot{Car}&\rot{Traffic sign}&\rot{Pedestrian}&\rot{Bicycle}&\rot{Motorcycle}&\rot{Road-work}&\rot{Traffic light} &\rot{Rider}   &\rot{Bus}   &\rot{Wall} &\rot{Lanemarking}& \rot{Class avg.} &\rot{Global avg.}&\rot{mIoU} \\ \hline
SegNet\cite{badrinarayanan2015segnet}  &95.5 &93.3 &85.0 &87.2 &24.9 &79.9   &16.9 &60.8 &0.2 &50.2 &1.4
 &10.6 &40.3 &0.0 &11.2 &65.5  &18.6 &\textbf{45.9} &43.7 &82.6 &36.7                  \\ \hline
FuseNet\cite{fusenet2016accv} &92.4 &94.5 &79.9 &70.2 &35.6 &73.0 &29.9  &64.4 &2.5 &57.5 &2.8 &9.4 &46.4 &2.6 &16.4 &60.9 &13.7 &20.9 &42.9 &78.1 &35.9             \\ \hline
S3D(ours)     &\textbf{97.4} &\textbf{97.1} &\textbf{91.8} &\textbf{91.2} &\textbf{59.6} &\textbf{90.7} &\textbf{47.8} &\textbf{87.6} &\textbf{15.5} &\textbf{72.9} &\textbf{13.5} &\textbf{36.4} &\textbf{72.24} &\textbf{31.7} &\textbf{32.6} &\textbf{83.3} &\textbf{58.2} &42.1 &\textbf{62.3} &\textbf{90.2} &\textbf{54.5}                         \\ \hline
\end{tabularx}
\end{table*}

\textbf{Results on Synthia.}
In Table \ref{tab:sy}, we quantitatively compare our method (S3D) with state-of-the-art RGB semantic segmentation approach ``SegNet'' \cite{badrinarayanan2015segnet} and RGB-D based approach ``FuseNet'' \cite{fusenet2016accv}. For SegNet and FuseNet, we use the same input size and initialize the network parameters from the VGG model pre-trained on ImageNet. We train SegNet for 790 epochs and 230 epochs for FuseNet. For FuseNet, we use the same scaled inverse depth maps to train our network. Our method significantly outperforms competing methods with a notable margin under all three metrics: \textbf{18.6\%} on class average accuracy, \textbf{7.6\%} on global accuracy and \textbf{17.8\%} on mIoU. 
Note that there are 4 classes never show up in testing set, so we remove them from the table and during the error calculation. In Fig.~\ref{fig:sy1}, we present qualitative comparison between our method and state-of-the-art methods on the Synthia dataset, which clearly demonstrates the superior performance of our method.


\textbf{Results on Cityscapes.} Quantitative comparison with state-of-the-art semantic labeling methods on the Cityscapes dataset is shown in Table \ref{tab:cs}. The weight of FuseNet and SegNet are initialized from the VGG model trained on ImageNet. We also compare our method with the top performing one on the Cityscapes benchmark: ResNet-38\cite{word.zifeng.2016}. Given RGB-D pair as input, we achieve similar performance with FuseNet. However, by swapping RGB image with trained features, our method outperforms all competing methods with a margin \textbf{1.2\%}, \textbf{0.6\%}, \textbf{1.0\%} for class average accuracy, global accuracy and mIoU respectively.  The margin is not as clear as previous one is due to the noise and errors in the disparity map. However, our algorithm still successfully squeezed useful information from it and increased the performance. Advanced disparity recovery algorithm \cite{SsSMnet} should lead to better performance.
In Fig.~\ref{fig:cs3}, we present qualitative comparison between our framework and state-of-the-art methods on the Cityscapes dataset, which proves the superiority of our method.

\begin{figure*}[htb]
\begin{center} 
\subfigure{
\includegraphics[width=0.31\columnwidth]{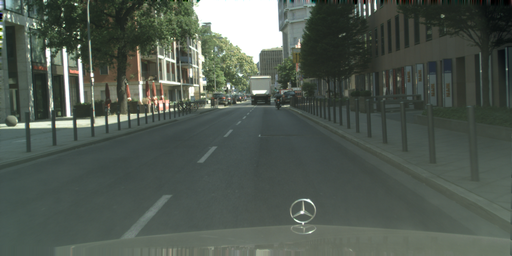} } \hspace{-.3cm}
\subfigure{
\includegraphics[width=0.31\columnwidth]{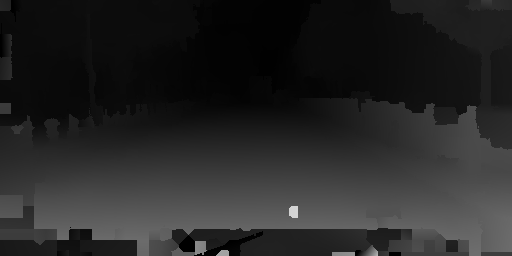} } \hspace{-.3cm}
\subfigure{
\includegraphics[width=0.31\columnwidth]{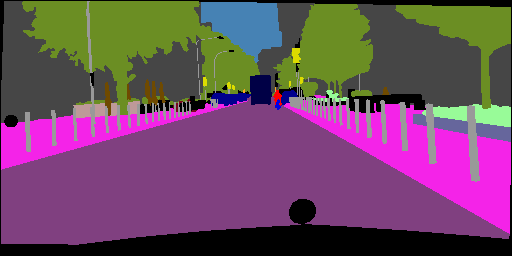} } \hspace{-.3cm}
\subfigure{
\includegraphics[width=0.31\columnwidth]{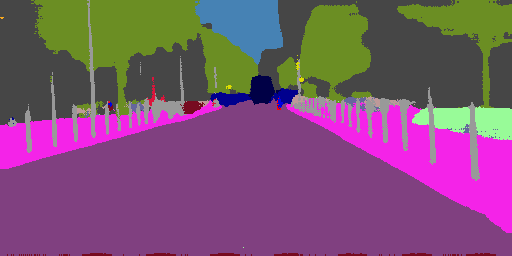} } \hspace{-.3cm}
\subfigure{
\includegraphics[width=0.31\columnwidth]{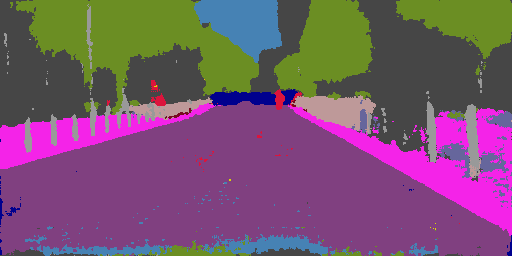} } \hspace{-.3cm}
\subfigure{
\includegraphics[width=0.31\columnwidth]{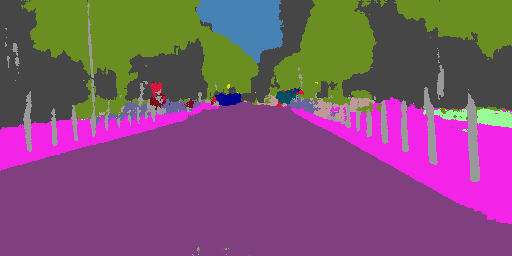} }

 \vspace{-.2cm}
\subfigure{
\includegraphics[width=0.31\columnwidth]{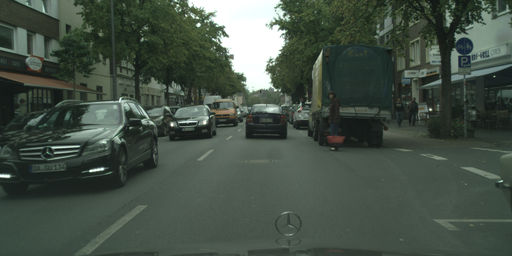} } \hspace{-.3cm}
\subfigure{
\includegraphics[width=0.31\columnwidth]{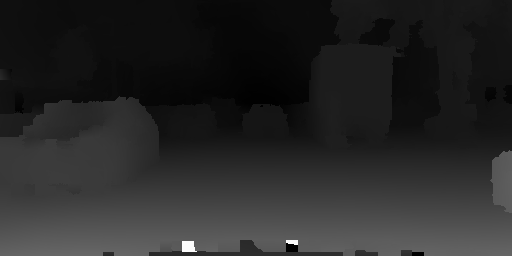} } \hspace{-.3cm}
\subfigure{
\includegraphics[width=0.31\columnwidth]{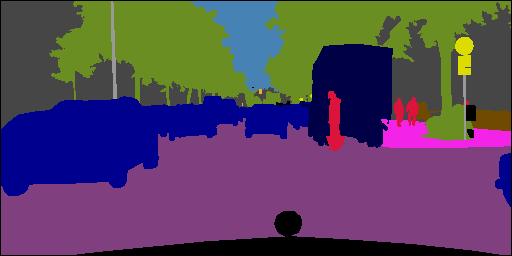} } \hspace{-.3cm}
\subfigure{
\includegraphics[width=0.31\columnwidth]{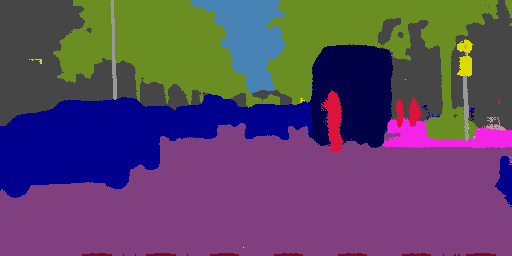} } \hspace{-.3cm}
\subfigure{
\includegraphics[width=0.31\columnwidth]{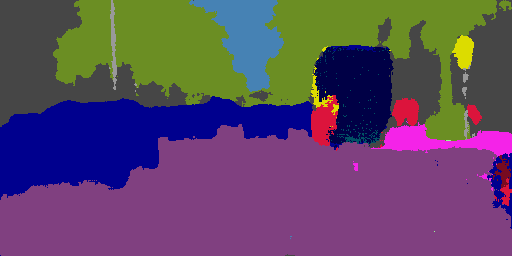} } \hspace{-.3cm}
\subfigure{
\includegraphics[width=0.31\columnwidth]{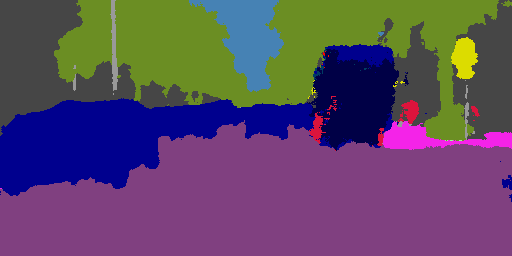} }  

 \vspace{-.2cm}
\subfigure{
\includegraphics[width=0.31\columnwidth]{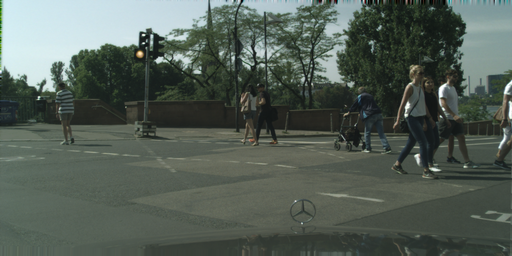} } \hspace{-.3cm}
\subfigure{
\includegraphics[width=0.31\columnwidth]{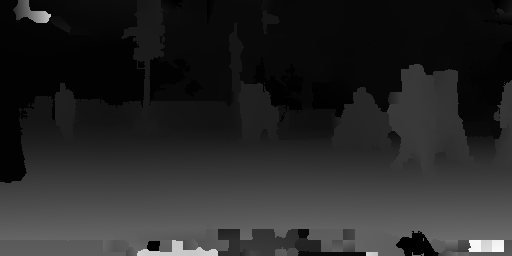} } \hspace{-.3cm}
\subfigure{
\includegraphics[width=0.31\columnwidth]{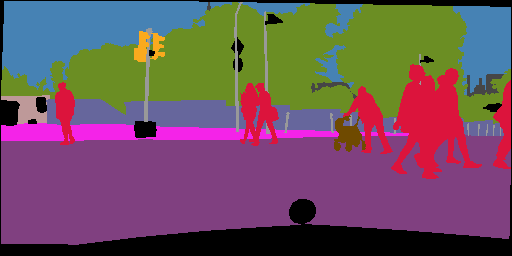} } \hspace{-.3cm}
\subfigure{
\includegraphics[width=0.31\columnwidth]{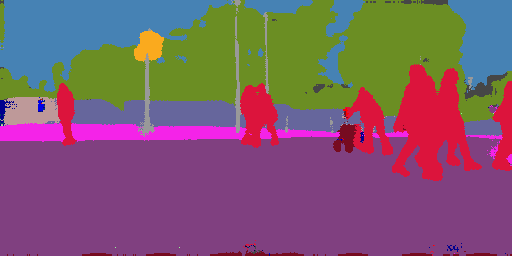} } \hspace{-.3cm}
\subfigure{
\includegraphics[width=0.31\columnwidth]{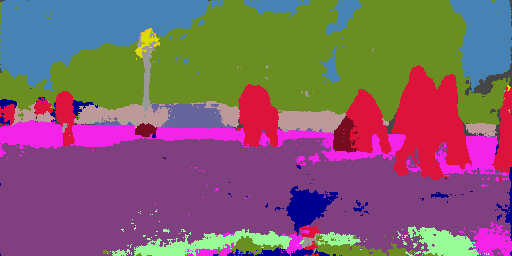} } \hspace{-.3cm}
\subfigure{
\includegraphics[width=0.31\columnwidth]{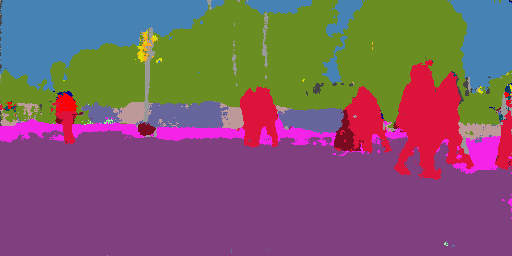} } 

\vspace{-.2cm}
\setcounter{subfigure}{0}
\subfigure[\scriptsize Color image]{
\includegraphics[width=0.31\columnwidth]{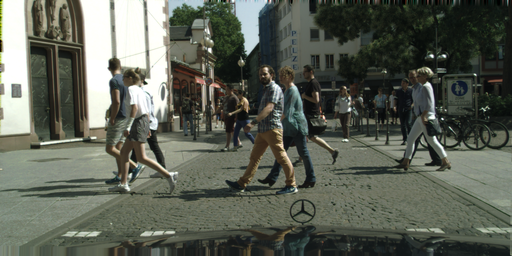} } \hspace{-.3cm}
\subfigure[\scriptsize Disparity]{
\includegraphics[width=0.31\columnwidth]{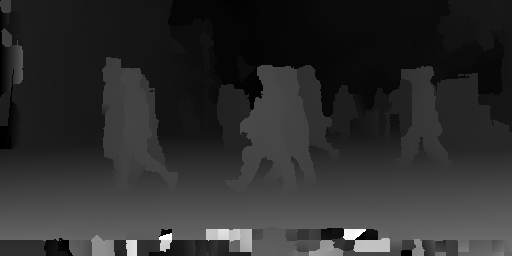} } \hspace{-.3cm}
\subfigure[\scriptsize Ground truth]{
\includegraphics[width=0.31\columnwidth]{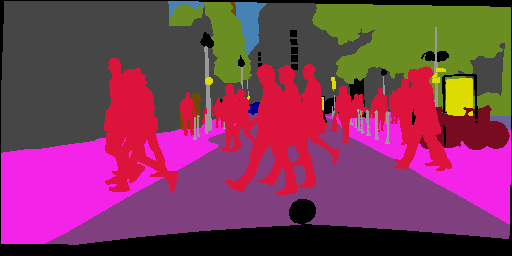} } \hspace{-.3cm}
\subfigure[\scriptsize Our method]{
\includegraphics[width=0.31\columnwidth]{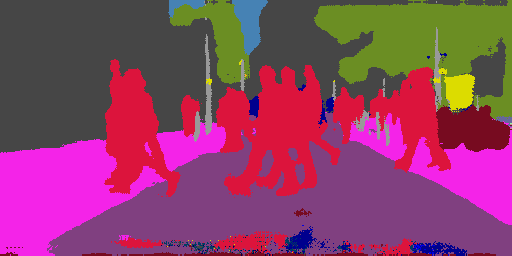} } \hspace{-.3cm}
\subfigure[\scriptsize FuseNet]{
\includegraphics[width=0.31\columnwidth]{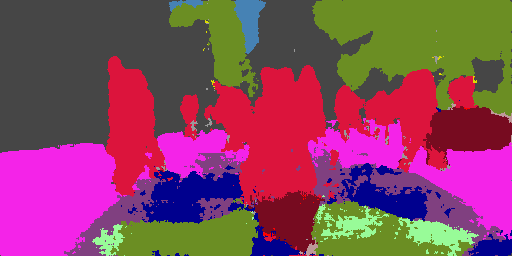} } \hspace{-.3cm}
\subfigure[\scriptsize SegNet]{
\includegraphics[width=0.31\columnwidth]{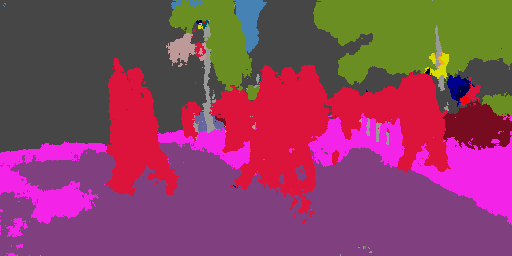} } 
\caption{\label{fig:cs3}\small \textbf{Qualitative evaluation on the Cityscapes dataset.} Our method with feature and disparity inputs (d) clearly outperforms the competing methods. The shape of pedestrians and poles are well preserved in our predictions. We shall see that FuseNet is effected by the noisy disparity map that has worse performance than SegNet. Note the invalid label is colored with black.}
 \end{center}
\end{figure*}

\begin{table*}[!htp]
\centering
\caption{\label{tab:cs}\small Performance evaluation on Cityscapes validation set}
\label{Cityscapes_result}
\tabcolsep=0.12cm
\begin{tabularx}{\textwidth}{c|ccccccccccccccccccc|ccc}
     Method                              &\rot{Road}&\rot{Sidewalk}&\rot{Building}&\rot{Wall}&\rot{Fence}&\rot{Pole}&\rot{Traffic light}&\rot{Traffic sign}&\rot{Vegetation}&\rot{Terrain}&\rot{Sky}&\rot{Person}&\rot{Rider}&\rot{Car}&\rot{Truck}&\rot{Bus} &\rot{Train}  & \rot{Motorcycle} &\rot{Bicycle}  & \rot{Class avg.} &\rot{Global avg.}&\rot{mIoU} \\ \hline
SegNet\cite{badrinarayanan2015segnet}  &97.4 &82.0 &92.5 &35.2 &35.8 &40.9  &8.2 &40.5 &93.2 &62.6 &96.2 &71.9
 &11.8 &92.7 &46.3 &41.6 &27.2  &9.9 &61.2 &55.1 &90.5 &45.8                  \\ \hline
ResNet-38\cite{word.zifeng.2016}         &97.9&81.1 &\textbf{93.6} &62.0 &58.4 &41.9 &55.4 &\textbf{62.5} &94.2 &63.8 &92.3 &\textbf{79.1} &\textbf{54.9} &\textbf{94.7} &74.5 &76.5 &66.1&\textbf{61.4}&\textbf{75.1} & 72.9 & 92.2 &63.3 \\  \hline
FuseNet\cite{fusenet2016accv} &88.6 &82.1 &93.3 &25.4 &48.2 &42.7 &0.5 &47.0 &94.5 & 38.5 &\textbf{96.8} &75.8 &0.5 &93.6 &56.9 &2.2 &12.4 &0.0 &60.2 &50.5 &87.4 &39.1             \\ \hline
S3D(RGB-d)     &94.5 &72.2 &86.1 &15.7 &17.7 &34.0 &38.1 &52.3 &91.1 &65.9 &96.2 &64.3 &8.0 &86.9 &22.0 &20.5 &14.6 &3.5 &28.2 &48.2 &86.9 &39.1                         \\ \hline
S3D(feature-d) &\textbf{98.0} &\textbf{87.4} &93.3 &\textbf{66.2} &\textbf{71.3} &\textbf{46.8} &\textbf{60.2} &62.3 &\textbf{95.2} &\textbf{67.4} &92.8 &78.0 &41.8 &91.8 &\textbf{78.7} &\textbf{81.2} &\textbf{73.3} &47.5 &75.0  &\textbf{74.1} &\textbf{92.8} & \textbf{64.3}                           \\ \hline
\end{tabularx}
\end{table*}

\textbf{Ablation study}
To better understand the effectiveness of our 3D voxel representation, we perform ablation analysis and present the results in Table \ref{tab:ab}. S2D is the 2D version of our algorithm that replaces all 3D convolutions with 2D ones, where we stack the RGB image and disparity map into 4 channel input and plug into the S2D. S3D (Depth only) is the one with colorless ``point clouds'' which only provides shape information. According to this study, 3D voxel representation significantly improves the performance by $20.9\%$ in mIoU.

\begin{table*}[!htp]
\centering
\caption{Ablation study on the SYNTHIA dataset}
\label{tab:ab}
\tabcolsep=0.135cm
\begin{tabularx}{\textwidth}{c|cccccccccccccccccc|ccc}
     Method                              &\rot{sky}&\rot{Building}&\rot{Road}&\rot{Sidewalk}&\rot{Fence}&\rot{Vegetation}&\rot{Pole}&\rot{Car}&\rot{Traffic sign}&\rot{Pedestrian}&\rot{Bicycle}&\rot{Motorcycle}&\rot{Road-work}&\rot{Traffic light} &\rot{Rider}   &\rot{Bus}   &\rot{Wall} &\rot{Lanemarking}& \rot{Class avg.} &\rot{Global avg.}&\rot{mIoU} \\ \hline
S2D(RGB-d)  &98.0 &92.0 &62.6 &82.2 &41.6 &80.9 &33.4 &68.6 &2.5  &66.9  &0.3 &8.9   &60.2 &0.9  &1.6  &37.0  &0.0 &16.6 &41.9 &77.9 &33.6    \\ \hline
S3D(d only)     &\textbf{100.0} &\textbf{98.2} &91.4 &\textbf{91.7} &59.5 &\textbf{92.6} &\textbf{52.5} &83.8 &0.3 &\textbf{80.8} &12.6 &2.8 &46.4 &0.7 &25.9 &76.0 &38.0 &0.1 &52.9 &89.9 &47.0                         \\ \hline
S3D(RGB-d)     &97.4 &97.1 &\textbf{91.8} &91.2 &\textbf{59.6} &90.7 &47.8 &\textbf{87.6} &\textbf{15.5} &72.9 &\textbf{13.5} &\textbf{36.4} &\textbf{72.24} &\textbf{31.7} &\textbf{32.6} &\textbf{83.3} &\textbf{58.2} &\textbf{42.1} &\textbf{62.3} &\textbf{90.2} &\textbf{54.5}                         \\ \hline
\end{tabularx}
\end{table*}

\section{Conclusion}
In this paper, we have proposed a 3D voxel representation to integrate both appearance and depth information and a corresponding light-weight 3D Res-TDM network architecture for 3D geometry aware semantic segmentation. Our method provides an efficient and effective way to use geometric information to achieve better semantic labeling. Experiments on the ``Synthia'' and ``Cityscape'' datasets demonstrate that direct 3D convolution with our light-weight Res-TDM network can lead to superior performance, suggesting that such a simple 3D representation with Res-TDM is effective in incorporating 3D geometric information.

\section*{Acknowledgment}
We gratefully acknowledge the support of NVIDIA Corporation with donation of TITAN Xp GPU used for this research, as well a NVIDIA Drive-PX2 platform for an autonomous driving project.  YZ's PhD scholarship is funded by CSIRO Data61. Y. Dai was supported in part by National 1000 Young Talents Plan of China, Natural Science Foundation of China (61420106007, 61671387), and ARC grant (DE140100180). H. Li’s work is funded in part by Australia ARC Centre of Excellence for Robotic Vision (CE140100016).
\balance
{\footnotesize
\bibliographystyle{unsrt}
\bibliography{semantic}
}

\end{document}